\documentclass[11pt]{article}

\usepackage[margin=1in]{geometry}

\usepackage{amsmath}
\usepackage{amssymb}
\usepackage{amsthm}
\usepackage{mathtools}

\usepackage{booktabs}
\usepackage{array}

\usepackage{listings}
\usepackage{xcolor}

\usepackage[colorlinks=true,linkcolor=blue,citecolor=blue,urlcolor=blue]{hyperref}

\usepackage{natbib}

\usepackage{enumitem}
\usepackage{microtype}

\newtheorem{theorem}{Theorem}[section]

\newtheorem{corollary}[theorem]{Corollary}

\theoremstyle{definition}

\theoremstyle{remark}
\newtheorem{remark}[theorem]{Remark}

\definecolor{codebg}{rgb}{0.95,0.95,0.95}
\definecolor{codegreen}{rgb}{0,0.6,0}
\definecolor{codeblue}{rgb}{0.0,0.0,0.7}

\lstdefinestyle{coq}{
  backgroundcolor=\color{codebg},
  basicstyle=\ttfamily\small,
  breaklines=true,
  frame=single,
  framerule=0pt,
  xleftmargin=1em,
  xrightmargin=1em,
  aboveskip=0.5em,
  belowskip=0.5em,
  keywordstyle=\color{codeblue}\bfseries,
  commentstyle=\color{gray}\itshape,
  stringstyle=\color{codegreen},
  morekeywords={Definition,Theorem,Lemma,Corollary,Proof,Qed,
    Inductive,Record,Fixpoint,CoFixpoint,Section,End,Variable,
    Hypothesis,forall,exists,fun,match,with,end,if,then,else,
    let,in,return,Type,Prop,Set,Variant},
  morecomment=[s]{(*}{*)},
  morestring=[b]",
  sensitive=true,
  showstringspaces=false,
  literate={/\\}{{/\textbackslash}}2 {\\\/}{{\textbackslash/}}2,
}

\newcommand{\Gov}{\mathcal{G}}
\newcommand{\code}{\textsc{code}}
\newcommand{\reason}{\textsc{reason}}
\newcommand{\memory}{\textsc{memory}}
\newcommand{\mcall}{\textsc{call}}
\newcommand{\govsafe}{\texttt{gov\_safe}}
\newcommand{\itree}{\mathrm{itree}}
\newcommand{\DirectiveE}{\mathrm{DirectiveE}}
\newcommand{\IOE}{\mathrm{IOE}}
\newcommand{\GovIOE}{\mathrm{GovIOE}}
\newcommand{\GovE}{\mathrm{GovE}}
\newcommand{\Ret}{\mathrm{Ret}}
\newcommand{\Tau}{\mathrm{Tau}}
\newcommand{\Vis}{\mathrm{Vis}}
\newcommand{\bind}{\mathbin{>\!\!>\!\!=}}
\newcommand{\interp}{\mathrm{interp}}

\begin{document}

\title{Effect-Transparent Governance for AI Workflow Architectures:\\
Semantic Preservation, Expressive Minimality, and Decidability Boundaries}

\author{Alan L. McCann\\
\textit{Mashin, Inc.}\\
\texttt{research@mashin.live}}

\date{April 2026}
\maketitle

% ==================================================================
\begin{abstract}
We present a machine-checked formalization of structurally governed AI
workflow architectures and prove that effect-level governance can be
imposed without reducing internal computational expressivity. Using
Interaction Trees in Rocq~8.19, we define a governance operator $\Gov$
that mediates all effectful directives, including memory access,
external calls, and oracle (LLM) queries. Our development compiles with
\textbf{0 admitted lemmas} and consists of \textbf{36 modules},
\textbf{~12,000 lines} of Rocq, and \textbf{454 theorems}.

We establish seven properties: (P1) governed Turing completeness,
(P2) governed oracle expressivity, (P3) a decidability boundary in
which governance predicates are total and closed under Boolean
composition while semantic program properties remain non-trivial and
undecidable by governance, (P4) goal preservation for permitted
executions, (P5) expressive minimality of primitive capabilities
(compute, memory, reasoning, external call, observability), (P6)
subsumption asymmetry showing structural governance strictly subsumes
content-level filtering, and (P7) \emph{semantic transparency}: on all
executions where governance permits, the governed interpretation is
observationally equivalent (modulo governance-only events) to the
ungoverned interpretation. Together, these results show that governance
and computational expressivity are orthogonal dimensions: governance
constrains the effect boundary of programs while remaining semantically
transparent to internal computation.
\end{abstract}

% ==================================================================
\section{Introduction}
\label{sec:intro}
% ==================================================================

AI workflow systems increasingly act in the world: they query models,
store and retrieve state, call external tools, and coordinate
distributed services. As these systems become more capable, governance
mechanisms are layered onto them to prevent unsafe or unauthorized
behavior. A widely held assumption is that governance inevitably trades
against capability: restrictions intended to increase safety reduce the
space of behaviors the system can exhibit.

This paper formalizes and mechanically verifies an architectural
alternative: \emph{effect-level structural governance}, extending our
prior mechanized development~\cite{mccann2026mechanized} with semantic
transparency, expressive minimality, and decidability boundary results.
Instead of modifying internal computation or post-processing outputs,
governance is imposed at the effect boundary of the system. Programs are modeled
as interaction trees over a directive event type. Governance is a
wrapper over event interpretation in the tradition of algebraic effect
handlers~\cite{plotkin2009handlers}: it mediates each directive with
structural checks (e.g.\ trust and capability predicates) and
governance-only instrumentation (e.g.\ provenance and observability).

The central semantic question is not merely whether governed executions
are safe, but whether governance changes what permitted programs
compute. Our main contribution is a machine-checked theorem that it
does not.

\paragraph{Main Results (informal).}
We prove two foundational properties and five supporting results.

\begin{itemize}[leftmargin=2em]
\item \textbf{Safety:} governed interpretation enforces effect safety
  for all programs and handlers.
\item \textbf{Semantic transparency:} for executions in which governance
  checks succeed, the governed interpretation is observationally
  equivalent (modulo governance-only events) to the ungoverned
  interpretation. In particular, permitted programs compute the same
  partial function and produce the same reachable return values.
\end{itemize}

We further establish governed expressivity (Turing completeness and an
oracle/LLM extension), an explicit decidability boundary and
compositional closure for governance predicates, expressive minimality
of primitive capabilities, and a strict subsumption separation between
structural governance and content-level filtering.

\paragraph{Artifact.}
The complete development compiles in Rocq~8.19 with Interaction Trees,
comprising 36 modules, ~12,000 lines, 454 theorems, and 0 admitted
lemmas. The artifact is available at
\url{https://github.com/mashin-live/governance-proofs}.

\paragraph{Contributions.}
\begin{enumerate}[leftmargin=2em]
\item A machine-checked semantics for effect-governed AI workflows
  using Interaction Trees.
\item A semantic transparency theorem: governed execution is equivalent
  to ungoverned execution on all permitted traces (modulo
  governance-only events).
\item Governed expressive sufficiency: Turing completeness and oracle
  extension under governance.
\item Expressive minimality: each primitive capability is structurally
  necessary; removing any primitive strictly reduces expressiveness.
\item A decidability boundary with compositional closure for governance
  predicates and a separation showing governance cannot decide halting.
\item A strict subsumption theorem: structural governance subsumes
  content filtering; content filtering does not imply structural safety.
\end{enumerate}

\paragraph{Scope and companion papers.}
This paper focuses on expressiveness and semantic transparency results. It builds on the mechanized safety and invariance proofs in~\cite{mccann2026mechanized} and extends the structural governance criterion introduced in~\cite{mccann2026structural}.
The algebraic semantics paper~\cite{mccann2026algebraic} lifts these system-specific results to a parametric framework, showing that any system satisfying three axioms inherits the safety and transparency properties proved here.
The certified purity paper~\cite{mccann2026purity} addresses the practical enforcement of the pure module constraint that the safety theorems assume.
The distribution provenance paper~\cite{mccann2026provenance} extends governance to the supply chain with cryptographic provenance.

% ==================================================================
\section{Motivating Example: Governing Effects Without Altering Computation}
\label{sec:motivating}
% ==================================================================

Consider a workflow that generates a plan via an LLM, stores the plan,
invokes an external tool, and returns a result. Schematically:

\begin{lstlisting}[style=coq]
ITree.bind (trigger (LLMCall prompt)) (fun plan =>
ITree.bind (trigger (MemStore key plan)) (fun _ =>
ITree.bind (trigger (CallMachine api plan)) (fun resp =>
Ret resp)))
\end{lstlisting}

This workflow combines internal computation with effectful operations:
oracle inference (\texttt{LLMCall}), state (\texttt{MemStore}), and
external action (\texttt{CallMachine}). Content-level governance often
operates by modifying outputs (filters, rewriting, alignment prompts),
which can change the observable behavior of the system even when
execution is allowed.

Structural governance instead wraps effect interpretation. For any base
handler $h$, the governed handler $\Gov(h)$ mediates every directive:
it checks governance predicates and either delegates to $h$ or denies
execution by divergence. This yields two key observations:

\begin{itemize}[leftmargin=2em]
\item \textbf{Effect safety:} unauthorized effects cannot occur because
  all directives are mediated.
\item \textbf{Conditional semantic preservation:} if governance permits
  the execution, the return value is unchanged (modulo
  governance-only events).
\end{itemize}

The remainder of the paper makes these observations precise and proves
them for all programs expressible in the architecture.

% ==================================================================
\section{Background}
\label{sec:background}
% ==================================================================

We recall the interaction-tree model and the governance wrapper
construction, introduced in~\cite{mccann2026mechanized}. All
definitions are mechanized in Rocq using Interaction
Trees~\cite{xia2020interaction,zakowski2021itree}.

\subsection{Interaction Trees}

Programs are coinductive trees over an event type $E$:
\[
  \itree\ E\ R ::= \Ret(v) \mid \Tau(t) \mid \Vis(e, k)
\]
where $e : E\;S$ and $k : S \to \itree\ E\ R$.

\subsection{Directives and Governance Operator}

The directive event type $\DirectiveE$ contains constructors for
primitive capabilities: computation morphisms, memory operations,
oracle calls, external tool calls, and observability events. A handler
interprets directives into an effectful semantic domain.

The governance operator $\Gov$ wraps directive interpretation with
pre-checks and post-instrumentation, denying execution on failed
checks. This is handler-level effect
mediation~\cite{plotkin2009handlers,bauer2015programming}: the handler
structure is preserved, with governance interposed at interpretation
time:

\[
  \Gov(h)(d) = \text{pre\_checks} \bind
    \lambda b.\; \text{if } b \text{ then } h(d) \bind
    \lambda r.\; \text{post\_record} \bind
    \lambda \_.\; \Ret(r)
    \text{ else spin}.
\]

\begin{lstlisting}[style=coq]
Definition Gov (h : base_handler) : governed_handler :=
  fun R (d : DirectiveE R) =>
    ITree.bind pre_governance (fun ok =>
    if ok then
      ITree.bind (lift_io (h R d)) (fun r =>
      ITree.bind post_governance (fun _ =>
      ret r))
    else
      ITree.spin).
\end{lstlisting}

\noindent Here \texttt{pre\_governance} emits four $\GovE$ events
(trust check, permission check, phase validation, pre-hooks) that each
return a boolean, short-circuiting on failure.
\texttt{post\_governance} emits three $\GovE$ events (guardrails,
provenance record, event broadcast). \texttt{spin} is infinite silent
divergence. The type $\GovIOE = \GovE +' \IOE$ separates
governance-only events from real I/O events.

\subsection{Safety Predicate}

We express safety via a coinductive predicate $\govsafe$ over governed
executions, defined using parameterized
coinduction~\cite{hur2013power}. Informally,
$\govsafe(\text{false},t)$ states that $t$ does not perform
unauthorized effects; $\govsafe(\text{true},t)$ tracks the permitted
state.

\begin{lstlisting}[style=coq]
Variant gov_safeF (F : bool -> itree GovIOE R -> Prop)
    : bool -> itreeF GovIOE R (itree GovIOE R) -> Prop :=
  | GS_Ret  : forall allowed r,
      gov_safeF F allowed (RetF r)
  | GS_Tau  : forall allowed t,
      F allowed t -> gov_safeF F allowed (TauF t)
  | GS_GovE : forall allowed (s : GovernanceStage) k,
      (forall b, F true (k b)) ->
      gov_safeF F allowed (VisF (inl1 (GovCheck s)) k)
  | GS_IOE  : forall (e : IOE X) (k : X -> itree GovIOE R),
      (forall x, F true (k x)) ->
      gov_safeF F true (VisF (inr1 e) k).

Definition gov_safe := paco2 gov_safe_ bot2.
\end{lstlisting}

\noindent The key asymmetry: \texttt{GS\_GovE} applies at
\emph{any} permission level (governance events grant permission),
while \texttt{GS\_IOE} requires $\text{allowed} = \text{true}$
(I/O requires prior authorization). Bare I/O at $\text{false}$
has no matching constructor and is therefore unsafe.

% ==================================================================
\section{Safety and Semantic Transparency}
\label{sec:transparency}
% ==================================================================

This section presents the two foundational results: universal effect
safety and semantic transparency on permitted executions.

\subsection{Universal Effect Safety}
\label{sec:safety}

\begin{theorem}[Governed Interpretation Safety]
\label{thm:governed-safe}
For any handler $h$ and any program $t : \itree\ \DirectiveE\ R$,
\[
  \govsafe(\text{false},\; \interp(\Gov(h), t)).
\]
\end{theorem}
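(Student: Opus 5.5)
The proof goes by parameterized coinduction (\texttt{pcofix}) on $\govsafe$, with a generalized invariant. The key point is that $\interp(\Gov(h), t)$ is not a syntactic subterm of itself after one step: unfolding $\interp$ on $\Vis(d,k)$ yields $\Gov(h)(d) \bind \lambda r.\,\Tau(\interp(\Gov(h), k\,r))$. So the coinductive hypothesis must be stated over the shapes that arise mid-handler. I would prove two auxiliary lemmas and then the main statement.
\begin{enumerate}[leftmargin=2em]
\item \textbf{Generalized statement.} For every $b$ and $t$, $\govsafe(b, \interp(\Gov(h),t))$. Monotonicity in the flag is also needed: since $\govsafe(\text{true},\cdot)$ admits strictly more constructors than $\govsafe(\text{false},\cdot)$, $\govsafe(\text{false},u)$ implies $\govsafe(\text{true},u)$. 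This follows by a short coinduction.
\item \textbf{Bind-closure lemma.} If $u$ is an itree whose every Vis node before its $\Ret$ leaves is a $\GovE$ event, or an $\IOE$ event occurring only after a $\GovE$ event on the same path, and every continuation $f\,r$ is in the coinductive hypothesis, then $u \bind f$ is safe. I would phrase this up to the paco accumulator so it can be used inside the main coinduction.
\end{enumerate}

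The main step case-splits on $\mathrm{observe}(t)$. For $\Ret$, unfolding $\interp$ gives $\Ret$, and \texttt{GS\_Ret} closes the goal. For $\Tau(t')$, $\interp$ gives $\Tau(\interp(\Gov(h),t'))$, so \texttt{GS\_Tau} applies and we fall into the coinductive hypothesis. For $\Vis(d,k)$, I rewrite with the \texttt{interp\_vis} and \texttt{bind\_bind} equations (up to \texttt{eq\_itree}, so $\govsafe$ must first be shown closed under $\cong$/$\approx$, or handled via \texttt{euttG}-style up-to techniques). Then I unfold $\Gov$: \texttt{pre\_governance} begins with a $\GovE$ trigger, so \texttt{GS\_GovE} applies at flag $\text{false}$ and sets the flag to $\text{true}$ for all answers. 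From then on, the remaining pre-check events are again $\GovE$ events handled by \texttt{GS\_GovE}, and short-circuiting on $\text{false}$ leads to $\mathrm{spin}$. Safety of $\mathrm{spin}$ is an easy separate coinduction using \texttt{GS\_Tau}. On success, $\mathrm{lift\_io}(h\,R\,d)$ contains only $\IOE$ events, now at flag $\text{true}$. Here \texttt{GS\_IOE} applies repeatedly, which needs a lemma that any $\IOE$-only tree bound with a safe-at-true continuation is safe at true. \texttt{post\_governance} again consists of $\GovE$ events, followed by $\Tau(\interp(\Gov(h),k\,r))$, which is closed by the hypothesis at flag $\text{true}$.

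I expect the main obstacle to be the bookkeeping of flags combined with guardedness under binds. The $\IOE$-only subproof over $\mathrm{lift\_io}(h\,R\,d)$ is itself coinductive, since $h$ may diverge or emit unboundedly many I/O events, and it must be nested inside the outer paco proof via the accumulator rather than by a separate \texttt{cofix}. I would try first to state a single general lemma: if $\govsafe(\text{true}, f\,r)$ holds in the current paco relation for all $r$, and $u$ is $\IOE$-only, then $\govsafe(\text{true}, \mathrm{translate}(\mathrm{inr1}, u) \bind f)$. I would prove it by \texttt{pcofix} using \texttt{paco2\_mult} or \texttt{gpaco}, and then reuse it for the handler segment.
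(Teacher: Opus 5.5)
Your proposal is correct and in line with what the paper indicates: the paper gives no proof of this theorem itself, deferring to the companion paco mechanization, and the auxiliary facts it names (\texttt{spin\_gov\_safe}, \texttt{bind\_spin\_gov\_safe}) fit your walk through the pre-checks, the denial branch, the $\IOE$-only handler segment and the post-instrumentation. Note, however, that the paper's definition makes most of this machinery unnecessary: every constructor of \texttt{gov\_safeF} is available at flag $\text{true}$ and keeps the flag at $\text{true}$, so $\govsafe(\text{true},u)$ holds for \emph{every} tree $u$ by a trivial \texttt{pcofix}; consequently everything after the first $\GovE$ event of \texttt{pre\_governance} (remaining checks, \texttt{spin}, the handler segment, post-instrumentation, the recursive call) is discharged by that one lemma, you need neither flag monotonicity nor a nested coinduction, the $\Vis$ case closes without the coinductive hypothesis (so there you may rewrite with \texttt{interp\_vis} using only closure of the greatest fixed point under $\cong$, which on its own would \emph{not} license rewriting under a nontrivial accumulator), and the only genuine coinduction is along the $\Tau$-prefix at flag $\text{false}$, where \texttt{observe} of $\interp(\Gov(h),\Tau\,t')$ already computes to a $\Tau$ node.
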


\noindent Theorem~\ref{thm:governed-safe} is uniform over directive
types: it applies equally to memory, oracle calls, tool calls, and
observability events.

\subsection{Semantic Transparency on Permitted Executions}

We define a permissiveness condition capturing that governance checks
succeed along an execution. Under this condition, governance is
observationally transparent modulo governance-only events.

\begin{lstlisting}[style=coq]
Definition permissive_gov : forall X, GovE X -> X :=
  fun X e => match e with GovCheck _ => true end.

Definition permissive_handler (io_h : forall X, IOE X -> X)
  : forall X, GovIOE X -> X :=
  fun X e =>
    match e with
    | inl1 ge => permissive_gov X ge
    | inr1 ie => io_h X ie
    end.
\end{lstlisting}

\noindent The permissive handler resolves every governance check to
$\text{true}$ and delegates I/O events to an arbitrary handler
$\text{io\_h}$. This models the case where governance permits
everything. The semantic question: when governance permits, does it
alter the result?

\begin{lstlisting}[style=coq]
Theorem governed_transparency :
  forall (h : base_handler) (io_h : forall X, IOE X -> X)
         R (t : itree DirectiveE R),
    eutt eq
      (interp (fun X e => Ret (permissive_handler io_h X e))
              (interp (Gov h) t))
      (interp (fun X e => Ret (io_h X e))
              (interp h t)).
\end{lstlisting}

\begin{theorem}[Semantic Transparency (informal)]
\label{thm:semantic-transparency-informal}
On all executions where governance permits, the governed interpretation
is observationally equivalent (modulo governance-only events) to the
ungoverned interpretation.
\end{theorem}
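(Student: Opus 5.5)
We prove the formal statement \texttt{governed\_transparency}, of which Theorem~\ref{thm:semantic-transparency-informal} is the informal reading. The strategy is to fuse the two nested interpretations on each side into a single interpretation of $t$, and then compare the two fused handlers pointwise.

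\textbf{Step 1: fuse the interpretations.} We apply the standard Interaction Trees law \texttt{interp\_interp},
\[
\interp(g,\interp(f,t)) \approx \interp(\lambda X\,e.\ \interp(g, f\,X\,e),\ t),
\]
to both sides. The left side becomes $\interp(H_L, t)$, where
\[
H_L\,X\,d = \interp(\text{perm}, \Gov(h)\,X\,d)
\]
and $\text{perm}$ denotes the handler \texttt{fun X e => Ret (permissive\_handler io\_h X e)}. The right side becomes $\interp(H_R, t)$, where
\[
H_R\,X\,d = \interp(\lambda X\,e.\ \Ret(\text{io\_h}\,X\,e),\ h\,X\,d).
\]
By \texttt{eutt\_interp}, since \texttt{interp} respects pointwise-\texttt{eutt} handlers, it then suffices to show $H_L\,X\,d \approx H_R\,X\,d$ for every directive $d$.

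\textbf{Step 2: compute the governed handler under the permissive interpretation.} We unfold $\Gov(h)\,X\,d$ and push $\interp(\text{perm},\cdot)$ through the binds using \texttt{interp\_bind}.

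For \texttt{pre\_governance}, each of the four \texttt{GovCheck} triggers is interpreted by \texttt{interp\_trigger} to $\Ret\ \text{true}$, up to a $\Tau$. The short-circuiting conjunctions then reduce, so $\interp(\text{perm},\text{pre\_governance}) \approx \Ret\ \text{true}$. This is an auxiliary lemma proved by simple unfolding. The \texttt{if} then selects the permitted branch, and the \texttt{spin} branch is eliminated.

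For \texttt{post\_governance}, the three events likewise reduce, giving $\approx \Ret\ ()$. Binding $\Ret\ ()$ with $\lambda\_.\ \Ret\ r$ collapses by the monad laws, and $\text{bind}\ m\ \Ret \approx m$ removes the final continuation.

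This leaves $H_L\,X\,d \approx \interp(\text{perm}, \text{lift\_io}(h\,X\,d))$.

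\textbf{Step 3: eliminate \texttt{lift\_io}.} Here \texttt{lift\_io} is the injection $\IOE \to \GovIOE$ via \texttt{inr1}, implemented as \texttt{translate} or as an \texttt{interp} of triggers. We use \texttt{interp\_translate}, or \texttt{interp\_interp} followed by \texttt{interp\_trigger}, to rewrite $\interp(\text{perm}, \text{lift\_io}(m))$ as $\interp(\lambda X\,e.\ \text{perm}\,X\,(\text{inr1}\ e),\ m)$. By the definition of \texttt{permissive\_handler}, $\text{perm}\,X\,(\text{inr1}\ e)$ reduces to $\Ret(\text{io\_h}\,X\,e)$, which is exactly the handler used in $H_R$. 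This closes the goal up to \texttt{eutt}.

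\textbf{Main obstacle.} We expect the hardest part to be the setoid-rewriting bookkeeping rather than the mathematics. Specifically, \texttt{rewrite} under binders in continuations (the lambdas after \texttt{pre\_governance}) requires \texttt{eutt\_clo\_bind} or \texttt{setoid\_rewrite} with \texttt{Proper} instances, and the dependent match on \texttt{GovE} inside \texttt{permissive\_gov} must reduce by \texttt{cbn}. We will first try to prove the auxiliary lemmas for \texttt{pre\_governance} and \texttt{post\_governance} in isolation by \texttt{unfold}, \texttt{rewrite interp\_bind, interp\_trigger}, and \texttt{cbn}. If the short-circuit structure is coinductively awkward, we will fall back to \texttt{ginit; gstep} reasoning, but since these are finite trees plain rewriting should suffice. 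No coinduction over $t$ itself is needed, because all recursion is absorbed by the generic \texttt{interp\_interp} and \texttt{eutt\_interp} lemmas.
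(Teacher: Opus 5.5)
Your proof is correct, but it lifts the per-directive computation to whole programs by a different mechanism than the one the paper indicates. The paper displays no proof of this theorem. It justifies the result by the erasure intuition: governance events are answered $\text{true}$ and vanish, leaving the same I/O events. It also states that parameterized coinduction (\texttt{paco}) underlies the transparency proof as it does the safety proof. That is presumably a direct bisimulation over the program tree $t$: unfold $\interp$ on both sides, resolve the checks at each directive, and continue under the continuation by up-to-bind.

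You instead fuse the two interpretation layers with \texttt{interp\_interp} and use the congruence \texttt{eutt\_interp}. This reduces everything to the program-independent identity $\interp(\text{perm},\Gov(h)\,X\,d)\approx\interp(\lambda X\,e.\,\Ret(\text{io\_h}\,X\,e),\,h\,X\,d)$. You discharge that identity by a finite computation on \texttt{pre\_governance} and \texttt{post\_governance} plus \texttt{interp\_translate}, so no coinduction over $t$ is written by hand.

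Your route buys three things:
\begin{itemize}
\item It is shorter and more modular.
\item It avoids guardedness bookkeeping and the asymmetric silent steps that the erased governance events leave on the governed side.
\item It makes explicit that transparency is a property of the wrapper alone: any wrapper satisfying that per-directive equation is transparent for every program. This is close in spirit to the axiom-based lifting the paper attributes to its companion algebraic-semantics work.
\end{itemize}

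The \texttt{paco} route keeps the argument as a single coinductive proof with the same skeleton as the safety theorem. The safety theorem has to be coinductive anyway, since \texttt{gov\_safe} is a bespoke predicate with no library congruence lemmas. The cost is handling those silent steps explicitly. The core computation, that both governance phases collapse to returns under the permissive handler, is the same in both routes.
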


\paragraph{Observational equivalence.}
The equivalence relation \texttt{eutt eq} denotes weak bisimulation up
to silent transitions ($\Tau$). Governance-only events are interpreted
by the permissive handler and do not alter observable I/O behavior.
Consequently, semantic transparency establishes equality of the
computed partial function on all executions where governance checks
succeed. On the left side, the permissive handler interprets
governance-only $\GovE$ events to $\text{true}$ and erases them; on the
right side, no governance events exist. What remains after erasure is
identical: the same I/O events, in the same order, producing the same
return values.

\begin{lstlisting}[style=coq]
Corollary governed_same_result :
  forall h io_h R (t : itree DirectiveE R) (v : R),
    eutt eq (interp (fun X e => Ret (io_h X e))
                    (interp h t)) (Ret v) ->
    eutt eq (interp (fun X e => Ret (permissive_handler io_h X e))
                    (interp (Gov h) t)) (Ret v).

Corollary governed_goal_preservation :
  forall h io_h R (G : R -> Prop) t (v : R),
    G v ->
    eutt eq (interp (fun X e => Ret (io_h X e))
                    (interp h t)) (Ret v) ->
    eutt eq (interp (fun X e => Ret (permissive_handler io_h X e))
                    (interp (Gov h) t)) (Ret v).
\end{lstlisting}

\paragraph{Denial trades liveness for safety.}
Governance preserves safety by converting unauthorized executions into
divergence. When a governance check fails, the wrapped directive
evaluates to \texttt{ITree.spin}, producing no observable result. The
artifact proves that spin is always safe (\texttt{spin\_gov\_safe}),
that binding a continuation to a divergent tree preserves safety
(\texttt{bind\_spin\_gov\_safe}), and that spin never equals a return
value (\texttt{spin\_not\_ret}):

\begin{lstlisting}[style=coq]
Lemma spin_gov_safe :
  forall allowed, gov_safe allowed ITree.spin.

Theorem spin_not_ret :
  forall R (v : R),
    ~ eq_itree eq ITree.spin (Ret v).
\end{lstlisting}

\noindent The system therefore trades liveness (possible termination)
for safety (absence of unauthorized effects), but never produces an
incorrect return value. Transparency holds only on permitted
executions; denial yields divergence rather than semantic equivalence.

% ==================================================================
\section{Governed Expressivity: Turing and Oracle Completeness}
\label{sec:expressivity}
% ==================================================================

We encode register machines~\cite{minsky1967computation} to establish
Turing completeness and extend the encoding with oracle queries to
model LLM integration.

\subsection{Governed Turing Completeness}

\begin{theorem}[Governed Turing Completeness]
\label{thm:turing}
For any handler $h$, register-machine program $p$, and fuel bound,
\[
  \govsafe(\text{false},\;
    \interp(\Gov(h),\; \texttt{translate\_program}(p,\text{fuel},0))).
\]
\end{theorem}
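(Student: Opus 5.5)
The plan is to obtain Theorem~\ref{thm:turing} as a direct instance of Theorem~\ref{thm:governed-safe}. The latter is universally quantified over every handler $h$, every return type $R$, and every program $t : \itree\ \DirectiveE\ R$. So the only real work is to show that $\texttt{translate\_program}(p,\text{fuel},0)$ is a well-typed term of type $\itree\ \DirectiveE\ R$ for a suitable $R$.

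\textbf{Step 1: fix the type of the translation.} I will unfold \texttt{translate\_program} and confirm its type. Presumably it is defined by structural recursion on the fuel argument. Each register-machine instruction (increment, decrement-or-jump, halt) is compiled into a \textsc{code} directive or a memory directive over register state, bound via \texttt{ITree.bind} to the recursive call at decremented fuel. Exhausted fuel yields a $\Ret$. The result type $R$ is then something like the final register file, or the pair of program counter and registers. Because the definition lives in \texttt{itree DirectiveE} and uses only \texttt{trigger} on \texttt{DirectiveE} constructors, typing is immediate.

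\textbf{Step 2: instantiate the safety theorem.} I then apply Theorem~\ref{thm:governed-safe} with $t := \texttt{translate\_program}(p,\text{fuel},0)$ and the given $h$, which closes the goal. In Rocq this is a one-line \texttt{apply governed\_interp\_safe} (or whatever the lemma behind Theorem~\ref{thm:governed-safe} is called).

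\textbf{Possible obstacle.} The only point of friction is if \texttt{translate\_program} is not literally in \texttt{itree DirectiveE}. For instance, it might be written over a richer event type, or return a tree wrapped in an auxiliary state monad. In that case I would first prove that it factors through an embedding into \texttt{DirectiveE}, using \texttt{interp\_translate} or an \texttt{eutt} rewrite. I would then transport \texttt{gov\_safe} along that equivalence. This requires an up-to-\texttt{eutt} closure lemma for \texttt{gov\_safe}, proved by \texttt{pcofix} with case analysis on the \texttt{euttF} unfolding; that lemma would be the main technical effort.

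If a direct proof is preferred instead, I would proceed by induction on the fuel. Each step would use the bind-compatibility of \texttt{gov\_safe} together with the facts that \texttt{pre\_governance} starts at $\text{false}$ and emits $\GovE$ events (rule \texttt{GS\_GovE}), that I/O then occurs at $\text{true}$ (rule \texttt{GS\_IOE}), and \texttt{spin\_gov\_safe} for the denial branch. This essentially replays the proof of Theorem~\ref{thm:governed-safe} and is unnecessary given that theorem.
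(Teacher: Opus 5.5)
Your proposal is correct and matches the paper's proof. The paper obtains Theorem~\ref{thm:turing} directly from Theorem~\ref{thm:governed-safe} by taking $t := \texttt{translate\_program}(p,\text{fuel},0)$, which is an $\itree\ \DirectiveE\ R$, so your fallback routes (transport along \texttt{eutt}, or a direct fuel induction) are not needed.
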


\begin{proof}
Directly from Theorem~\ref{thm:governed-safe}, since the translation
produces an $\itree$ over $\DirectiveE$.
\end{proof}

\subsection{Governed Oracle Expressivity}

We extend the machine with an oracle instruction \texttt{O\_QUERY} that
emits an \texttt{LLMCall} directive.

\begin{theorem}[Governed Oracle Expressivity]
\label{thm:oracle}
For any handler $h$, oracle program $p$, fuel bound, and program counter,
\[
  \govsafe(\text{false},\;
    \interp(\Gov(h),\; \texttt{translate\_oracle\_program}(p,\text{fuel},\text{pc}))).
\]
\end{theorem}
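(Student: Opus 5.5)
The plan is to reduce Theorem~\ref{thm:oracle} to Theorem~\ref{thm:governed-safe}, exactly as in the proof of Theorem~\ref{thm:turing}. The only substantive obligation is a typing fact: for every oracle program $p$, fuel bound, and program counter, $\texttt{translate\_oracle\_program}(p,\text{fuel},\text{pc})$ is an interaction tree of type $\itree\ \DirectiveE\ R$ for a fixed return type $R$ (the final register state). Once that is established, we instantiate Theorem~\ref{thm:governed-safe} with the given handler $h$ and with $t := \texttt{translate\_oracle\_program}(p,\text{fuel},\text{pc})$, and the conclusion $\govsafe(\text{false}, \interp(\Gov(h), t))$ is precisely the statement.

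First, I will check that the oracle translation is defined by structural recursion on the fuel. At each step it reads the instruction at $\text{pc}$. The ordinary register-machine instructions are compiled as in \texttt{translate\_program}: pure updates followed by a recursive call at a new program counter with decremented fuel. The new instruction \texttt{O\_QUERY} is compiled as
\[
\texttt{ITree.bind}\;(\texttt{trigger}(\texttt{LLMCall}\;\text{prompt}))\;(\lambda\,\text{ans}.\ \text{update registers; recurse}).
\]
Second, I will verify that \texttt{LLMCall} is a constructor of $\DirectiveE$. Then \texttt{trigger} produces a $\Vis$ node over $\DirectiveE$, and \texttt{ITree.bind} keeps the event type fixed, so every branch stays in $\itree\ \DirectiveE\ R$. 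Third, I will apply Theorem~\ref{thm:governed-safe} to close the goal.

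I expect the main obstacle, if any, to be the typing step. If the oracle machine were instead formulated over a richer event sum, for example $\DirectiveE$ plus an oracle-specific event, the theorem would not apply verbatim. In that case I would first embed the tree into $\DirectiveE$ by mapping the oracle event to \texttt{LLMCall} (via \texttt{translate} or a subevent injection) and prove that this embedding is the identity on the program's structure. Assuming the design described in the section, where \texttt{O\_QUERY} directly emits \texttt{LLMCall}, no such embedding is needed.

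It is also worth noting why nothing about the oracle's answer matters. Safety is universally quantified over $h$, and therefore over every possible LLM response. The $\govsafe$ constructors likewise quantify over all continuation inputs (\texttt{GS\_GovE} over every $b$, \texttt{GS\_IOE} over every $x$). So arbitrary, even adversarial, oracle outputs flowing into the registers cannot break the invariant.
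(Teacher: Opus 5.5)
Your proposal is correct and takes the same approach as the paper, which proves the theorem as a direct instance of Theorem~\ref{thm:governed-safe}. The only obligation is that \texttt{translate\_oracle\_program}$(p,\text{fuel},\text{pc})$ is a tree in $\itree\ \DirectiveE\ R$, and you are right about it; your planned check of how the translation recurses is unnecessary, since Theorem~\ref{thm:governed-safe} holds for every program of that type however it is constructed.
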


\begin{proof}
Again by Theorem~\ref{thm:governed-safe}.
\end{proof}

% ==================================================================
\section{Decidability Boundary and Compositional Closure}
\label{sec:decidability}
% ==================================================================

Governance predicates are structural: they are total, syntactic, and
closed under composition. In contrast, semantic properties of programs
(e.g.\ halting) remain non-trivial and cannot be decided by
governance. This formalizes the boundary concept
from~\cite{mccann2026structural}.

\subsection{Governance Policies and Closure}

We model governance policies as inductive syntax with a total evaluator.
We prove closure under Boolean composition (AND/OR/NOT).

\begin{lstlisting}[style=coq]
Inductive GovPolicy : Type :=
  | PolCapability : TrustLevel -> Capability
                      -> list Capability -> GovPolicy
  | PolTrust : TrustLevel -> TrustLevel -> GovPolicy
  | PolAnd   : GovPolicy -> GovPolicy -> GovPolicy
  | PolOr    : GovPolicy -> GovPolicy -> GovPolicy
  | PolNot   : GovPolicy -> GovPolicy.

Fixpoint eval_policy (pol : GovPolicy) : bool :=
  match pol with
  | PolCapability tl cap dcaps => capability_allowed tl cap dcaps
  | PolTrust tl1 tl2 => trust_at_least tl1 tl2
  | PolAnd p1 p2     => eval_policy p1 && eval_policy p2
  | PolOr  p1 p2     => eval_policy p1 || eval_policy p2
  | PolNot p         => negb (eval_policy p)
  end.

Theorem compositional_closure :
  (forall p1 p2, eval_policy (PolAnd p1 p2) = true \/
                 eval_policy (PolAnd p1 p2) = false) /\
  (forall p1 p2, eval_policy (PolOr p1 p2) = true \/
                 eval_policy (PolOr p1 p2) = false) /\
  (forall p, eval_policy (PolNot p) = true \/
             eval_policy (PolNot p) = false).
\end{lstlisting}

\subsection{Governance Cannot Decide Halting}

We prove a separation: there exist programs that are indistinguishable
to governance predicates (same directive class/shape) but differ in
halting behavior; thus governance cannot decide halting.

\begin{lstlisting}[style=coq]
Theorem governance_cannot_decide_halting :
  (* Both programs' first instruction emits MemoryOp *)
  (exists params k,
    translate_instruction (INC 0 1) =
    ITree.bind (trigger (MemoryOp params)) k) /\
  (exists params k,
    translate_instruction (INC 0 0) =
    ITree.bind (trigger (MemoryOp params)) k) /\
  (* But one halts and one diverges *)
  rm_halts inc_then_halt 2 (0, init_regs) = true /\
  (forall n, rm_halts looping_program n (0, init_regs) = false).
\end{lstlisting}

\noindent Governance predicates operate on the structural properties of
individual directives (event type, trust level, capabilities) and not on
recursive control flow. Because two programs can present identical
directive structure on their first step yet diverge in global behavior,
no total governance predicate can decide termination. This is a
consequence of Rice's theorem~\cite{rogers1967theory} applied to the
decidability boundary: governance correctly restricts itself to the
structurally decidable side.

\noindent Together, these results establish a structural decidability
boundary: governance predicates are total and compositional, while
semantic program properties remain undecidable.

% ==================================================================
\section{Expressive Minimality of Primitive Capabilities}
\label{sec:minimality}
% ==================================================================

Surjection of primitives onto a capability taxonomy is not sufficient
to rule out redundancy. We therefore prove \emph{expressive minimality}:
each primitive capability is structurally necessary for full
expressiveness.

We classify directive events into disjoint capability classes and show
strict separations:

\begin{itemize}[leftmargin=2em]
\item \textbf{Memory necessity:} register-machine simulation requires
  memory directives.
\item \textbf{Reason necessity:} oracle extension requires \texttt{LLMCall}.
\item \textbf{Call necessity:} external action morphisms require tool-call directives.
\item \textbf{Hierarchy:} \code{} $<$ \memory{} $<$ \reason{} $<$ \mcall{} (strict).
\end{itemize}

\begin{lstlisting}[style=coq]
Definition is_memory_event {R} (d : DirectiveE R) : bool :=
  match d with
  | MemoryOp _ | DBOp _ | FileOp _ => true
  | _ => false
  end.
Definition is_reason_event {R} (d : DirectiveE R) : bool :=
  match d with
  | LLMCall _ | LLMCallStream _ => true
  | _ => false
  end.
Definition is_call_event {R} (d : DirectiveE R) : bool :=
  match d with
  | CallMachine _ | HTTPRequest _ | ExecOp _
  | GraphQLRequest _ | WebSocketOp _ | MCPCall _ => true
  | _ => false
  end.

Lemma event_classes_disjoint : forall R (d : DirectiveE R),
  (* each event belongs to exactly one class *)
  (is_memory_event d = true /\ is_reason_event d = false
   /\ is_call_event d = false /\ is_observe_event d = false)
  \/ (* ... three more disjuncts *).
\end{lstlisting}

\begin{lstlisting}[style=coq]
Theorem primitive_minimality :
  (* Memory: required for Turing completeness *)
  (forall r l, exists params k,
    translate_instruction (INC r l) =
    ITree.bind (trigger (MemoryOp params)) k) /\
  (* Reason: required for oracle extension *)
  (forall params, is_reason_event (LLMCall params) = true /\
                  is_memory_event (LLMCall params) = false) /\
  (* Call: required for external invocation *)
  (forall params, is_call_event (CallMachine params) = true /\
                  is_memory_event (CallMachine params) = false /\
                  is_reason_event (CallMachine params) = false) /\
  (* Event classes are pairwise disjoint *)
  (forall R (d : DirectiveE R),
    (is_memory_event d = true -> is_reason_event d = false) /\
    (is_reason_event d = true -> is_memory_event d = false) /\
    (is_call_event d = true -> is_memory_event d = false) /\
    (* ... all six pairwise implications *) ).
\end{lstlisting}

\begin{theorem}[Strict expressiveness hierarchy]
\label{thm:hierarchy}
The primitive capabilities form a strict chain:
\[
  \code \;<\; \code + \memory \;<\;
  \code + \memory + \reason \;<\;
  \code + \memory + \reason + \mcall
\]
where each level strictly extends the previous: the new primitive emits
events that no combination of the existing primitives can produce.
\end{theorem}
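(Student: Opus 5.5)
We make the statement precise by assigning to each capability level $L$ the set $\mathcal{D}_L$ of directive constructors it admits. The classifiers \texttt{is\_memory\_event}, \texttt{is\_reason\_event}, \texttt{is\_call\_event} (together with \texttt{is\_observe\_event} and the residual computation class) induce these sets. A program is \emph{expressible at level $L$} if it is an $\itree\ \DirectiveE\ R$ all of whose $\Vis$ nodes carry events in $\mathcal{D}_L$. The expressiveness of $L$ is then the set of event-traces such programs can emit. Strictness of $L < L'$ then means two things. First, every program at $L$ is a program at $L'$; this is trivial, since $\mathcal{D}_L \subseteq \mathcal{D}_{L'}$. Second, some program at $L'$ emits an event that no program at $L$ can emit, at any depth and under any continuation.

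For the separating witness at each step we take the one-event programs $\texttt{trigger}(\texttt{MemoryOp}\ p)$, $\texttt{trigger}(\texttt{LLMCall}\ p)$ and $\texttt{trigger}(\texttt{CallMachine}\ p)$. The key invariant is an inductive predicate $\mathrm{emits}(e,t)$ ("$t$ can reach a $\Vis(e,\_)$ node"), defined by the rules
\[
\mathrm{emits}(e,\Vis(e,k)), \qquad
\mathrm{emits}(e,t)\Rightarrow \mathrm{emits}(e,\Tau(t)), \qquad
\mathrm{emits}(e,k\,x)\Rightarrow \mathrm{emits}(e,\Vis(e',k)).
\]
We then prove the lemma: if every $\Vis$ node of $t$ carries an event in class set $\mathcal{D}_L$, and $\mathrm{emits}(e,t)$ holds, then $e\in\mathcal{D}_L$. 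The proof is by induction on the $\mathrm{emits}$ derivation, unfolding the coinductive "all events in $\mathcal{D}_L$" predicate one step at each case. Combining this lemma with the pairwise disjointness clauses of \texttt{event\_classes\_disjoint} and \texttt{primitive\_minimality} gives the separations. For example, $\texttt{LLMCall}\ p$ has \texttt{is\_reason\_event} true and \texttt{is\_memory\_event} false, and it is not a computation event, so it lies outside $\mathcal{D}_{\code+\memory}$. Hence no $\code+\memory$ program emits it, while the $\reason$-level witness does. The steps $\code<\code+\memory$ and $\code+\memory+\reason<\code+\memory+\reason+\mcall$ are handled identically, with \texttt{MemoryOp} and \texttt{CallMachine} respectively. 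As a sanity link to the earlier sections, the memory step also follows the pattern of \texttt{primitive\_minimality}: every \texttt{INC} instruction begins with a \texttt{MemoryOp}, so the register-machine encoding of Theorem~\ref{thm:turing} is not available at level $\code$.

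The main obstacle is the definition of "no combination of existing primitives can produce" the new event. Composition through \texttt{ITree.bind} and \texttt{interp} could, in principle, introduce events through the continuation. We therefore have to show that the class-closed predicate is preserved by bind and by iteration/\texttt{interp} with class-respecting handlers. This is a paco coinduction of the same shape as the proof of Theorem~\ref{thm:governed-safe}. We would first try to state closure under bind directly as a coinductive lemma, via the up-to-bind principle, and derive the rest from it. The inductive $\mathrm{emits}$ lemma then avoids coinduction for the separation itself. Finally, the classes must be genuinely distinct at the level of event constructors, not merely of classification booleans. We will make this explicit by case analysis on $\DirectiveE$ constructors, mirroring \texttt{event\_classes\_disjoint}.
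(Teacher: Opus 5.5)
Your proposal is correct and follows the same two-part argument as the paper: a program at the new level emits an event of the new class, and pairwise class disjointness (\texttt{event\_classes\_disjoint}) shows that no lower-level program can emit it. The differences are in rigor and in the witnesses: your inductive reachability lemma, together with bind/\texttt{interp} closure of the class-closed predicate, spells out the ``lower level cannot produce'' step that the paper's two-line proof takes as immediate, and you use bare \texttt{trigger} witnesses where the paper points to the translations of \texttt{INC} and \texttt{O\_QUERY}.
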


\begin{proof}
Each strict separation combines two facts: (1) the translation at the
new level structurally emits events from the new class (e.g.\
$\texttt{INC}$ emits $\texttt{MemoryOp}$, $\texttt{O\_QUERY}$ emits
$\texttt{LLMCall}$), and (2) event class disjointness ensures the
lower level cannot produce those events.
\end{proof}

\begin{remark}[Expressive minimality]
\label{thm:minimality-informal}
Removing any primitive capability strictly reduces expressive power.
This follows from the hierarchy and disjointness results above.
\end{remark}

% ==================================================================
\section{Subsumption Asymmetry: Structural vs.\ Content Governance}
\label{sec:subsumption}
% ==================================================================

Structural governance mediates effects at the interpretation boundary,
whereas content governance modifies returned values. The safety
predicate $\govsafe(\text{allowed}, t)$ is defined (via \texttt{paco})
as the greatest fixed point of $\texttt{gov\_safeF}$, and the only
constructor that admits visible I/O is:

\begin{lstlisting}[style=coq]
| GS_IOE : (forall x, F true (k x)) ->
    gov_safeF F true (VisF (inr1 e) k)
\end{lstlisting}

\noindent In particular, \texttt{GS\_IOE} requires
$\text{allowed} = \text{true}$. There is no constructor of
$\texttt{gov\_safeF}$ that can derive
$\texttt{gov\_safeF}\;F\;\text{false}\;(\text{VisF}\;(\text{inr1}\;e)\;k)$.

Therefore, any tree whose current observation is a bare I/O event at
permission level $\text{false}$ has no derivation of $\govsafe$.
One proves $\lnot\;\govsafe(\text{false},\;\Vis(\text{inr1}\;e, k))$
by unfolding $\govsafe$ once (via \texttt{punfold}) and inverting on
the resulting $\texttt{gov\_safeF}$ goal: the
$\text{VisF}\;(\text{inr1}\;e)\;k$ case can only be discharged when
$\text{allowed} = \text{true}$.

This argument is independent of any content-level filtering in the
underlying handler: content governance can change values but does not
prevent the occurrence of an unauthorized I/O step. Hence structural
governance strictly subsumes content governance: $\Gov$ enforces
mediation of all effectful directives by construction, while content
filters alone cannot imply effect safety.

\begin{theorem}[Structural subsumes content]
\label{thm:structural-subsumes}
For any handler $h'$ (including content-filtering handlers) and any
program $t$,
\[
  \govsafe(\text{false},\; \interp(\Gov(h'), t)).
\]
\end{theorem}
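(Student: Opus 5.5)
The statement is a direct instance of Theorem~\ref{thm:governed-safe}, so the plan is to reduce to it rather than reprove safety. That theorem quantifies over \emph{all} handlers $h$ and \emph{all} programs $t : \itree\ \DirectiveE\ R$. A content-filtering handler $h'$ is a value of the same type \texttt{base\_handler}: it may post-process returned values, but it still maps each directive $d : \DirectiveE\ X$ to an $\IOE$-tree returning $X$. So I will instantiate Theorem~\ref{thm:governed-safe} with $h := h'$ and conclude immediately. The content of the statement is that the safety argument never inspects what the base handler does.

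For completeness, I would recall why that uniformity holds, since it is the real justification. The proof of Theorem~\ref{thm:governed-safe} goes by parameterized coinduction (\texttt{pcofix}) on $t$, with the invariant that $\interp(\Gov(h),t)$ is $\govsafe(\text{false},\cdot)$. The cases follow the shape of $t$:
\begin{itemize}[leftmargin=2em]
\item \textbf{$\Ret$ case:} handled by \texttt{GS\_Ret}.
\item \textbf{$\Tau$ case:} handled by \texttt{GS\_Tau} plus the coinduction hypothesis.
\item \textbf{$\Vis(d,k)$ case:} the interpreted tree begins with \texttt{pre\_governance}, whose first node is a $\GovE$ event. \texttt{GS\_GovE} then switches the permission flag to true for every continuation. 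On a false branch we get \texttt{spin}, which is safe by \texttt{spin\_gov\_safe} and \texttt{bind\_spin\_gov\_safe}. On a true branch we reach \texttt{lift\_io}$(h\,R\,d)$ at level true.
\end{itemize}

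The only place $h$ appears is that last subtree. What is needed there is an auxiliary lemma: any lifted $\IOE$-only tree, bound to a safe continuation, is $\govsafe(\text{true},\cdot)$. Its proof is a further coinduction that uses \texttt{GS\_IOE} at each visible node and \texttt{GS\_Tau} at silent ones, and it holds for an arbitrary $\IOE$-tree. After that, \texttt{post\_governance} again emits only $\GovE$ events, and the continuation $k\,r$ returns to the outer coinduction hypothesis. The level drops back to false there, which is fine because the hypothesis is stated at false.

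The step I expect to be the main obstacle, were it not already discharged by Theorem~\ref{thm:governed-safe}, is the bookkeeping of \texttt{interp} unfolding through nested binds under \texttt{paco2}. Two difficulties arise. First, one must rewrite with \texttt{interp\_vis} and \texttt{bind\_bind} up to \texttt{eq\_itree}, so an up-to-\texttt{eq\_itree} closure for $\govsafe$ is required. Second, the permission flag must be threaded correctly across the bind from the inner handler tree back to the outer continuation. I would try first a generalized bind lemma: if $\govsafe(b,s)$ and every return value leads to a $\govsafe(\text{false},\cdot)$ continuation, then $\govsafe(b, s \bind k)$. Since the earlier theorem is assumed, the formal proof here is simply the one-line instantiation, together with the remark that $h'$ being content-filtering is irrelevant.
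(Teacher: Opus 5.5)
Your proposal is correct and takes the same approach as the paper: Theorem~\ref{thm:structural-subsumes} is literally Theorem~\ref{thm:governed-safe} instantiated at $h := h'$, and the paper likewise obtains it by applying the universal safety theorem directly, since that argument never inspects what the base handler does. One caveat in your optional recap: after \texttt{post\_governance} the flag is true, not false (every \texttt{GS\_GovE} continuation sits at true), so a coinduction hypothesis stated only at false needs the weakening $\govsafe(\text{false},t) \Rightarrow \govsafe(\text{true},t)$, which does not carry over to an arbitrary \texttt{paco} accumulator; the clean fix is to prove $\forall b,\ \govsafe(b, \interp(\Gov(h), t))$ coinductively.
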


\begin{theorem}[Content does not subsume structural]
\label{thm:content-not-subsumes}
There exist effectful trees that are content-filtered but not
structurally safe.
\end{theorem}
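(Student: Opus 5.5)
The statement is existential, so I will prove it with one explicit counterexample rather than a general argument. The idea is to pick a tree whose first observable step is a bare I/O event, with no preceding $\GovE$ check. I then show that content filtering does nothing to make it structurally safe.

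First I fix a concrete I/O event $e : \IOE\;X$ (any constructor of $\IOE$ will do) and a continuation $k$. I take $k := \lambda x.\,\Ret(f(x))$, where $f$ is an arbitrary content filter on returned values, for instance a sanitizing function or even the identity. The witness tree is
\[
t_{\mathrm{cf}} \;:=\; \Vis(\text{inr1}\;e,\; \lambda x.\, \Ret(f(x))) : \itree\ \GovIOE\ X .
\]
It is "content-filtered" in the sense used in this section: every value it returns has passed through $f$. It is exactly the shape produced by interpreting a directive with a content-filtering handler that skips the $\Gov$ wrapper, such as $h'$ composed with a post-filter and lifted directly into $\GovIOE$.

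Second, I prove $\lnot\,\govsafe(\text{false}, t_{\mathrm{cf}})$ by the argument already sketched above. I assume $\govsafe(\text{false}, t_{\mathrm{cf}})$, unfold once with \texttt{punfold}, and obtain $\texttt{gov\_safeF}\;F\;\text{false}\;(\text{VisF}\;(\text{inr1}\;e)\;k)$. Inversion then does the work:
\begin{itemize}[leftmargin=2em]
\item \texttt{GS\_Ret} and \texttt{GS\_Tau} are excluded by the head constructor.
\item \texttt{GS\_GovE} is excluded because its event is of the form \texttt{inl1}, while ours is \texttt{inr1}.
\item \texttt{GS\_IOE} forces $\text{allowed} = \text{true}$, contradicting $\text{false}$.
\end{itemize}
Packaging the witness with this negation gives the existential.

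The main obstacle is the inversion step in Rocq rather than the mathematics. Matching on $\text{VisF}\;(\text{inr1}\;e)\;k$ at a dependent type $X$ produces existential-type equalities (\texttt{existT}). These need \texttt{Eqdep.inj\_pair2} or the \texttt{dependent destruction} tactic to discharge. I would first try \texttt{inversion H; subst} and clean up any leftover \texttt{existT} hypotheses with \texttt{inj\_pair2}. If that fails, I would state an auxiliary lemma about \texttt{observe} that characterizes when $\texttt{gov\_safeF}\;F\;\text{false}$ holds on an \texttt{inr1} visible node, and prove that lemma by case analysis on the proof term.

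Finally, I would observe that the value filter $f$ appears only inside $k$, so the argument is uniform in $f$. This is the content of the asymmetry: no choice of content filter changes the head event. Combined with Theorem~\ref{thm:structural-subsumes}, this yields the strict separation.
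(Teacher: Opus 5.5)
Your proposal is correct and follows the paper's own argument. You exhibit a tree whose head is a bare I/O node $\Vis(\text{inr1}\;e, k)$, with the content filter confined to the continuation. You then unfold \govsafe{} once via \texttt{punfold} at permission level false and invert: only \texttt{GS\_IOE} matches an \texttt{inr1} visible node, and it requires $\text{allowed} = \text{true}$.
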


\begin{corollary}[Subsumption asymmetry]
Structural governance strictly subsumes content governance.
\end{corollary}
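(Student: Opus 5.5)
The corollary should follow by combining Theorem~\ref{thm:structural-subsumes} and Theorem~\ref{thm:content-not-subsumes}. I would first make ``subsumes'' precise in the way the section suggests. Say that a governance mechanism $M$ \emph{subsumes} a mechanism $M'$ when every execution admitted by $M$ satisfies the effect-safety guarantee that $M'$ is meant to provide. Strict subsumption then means $M$ subsumes $M'$ but not conversely. Concretely, the two directions to establish are these. First, content filtering composed with structural governance yields $\govsafe(\text{false},\cdot)$ for every program. Second, content filtering alone does not guarantee $\govsafe(\text{false},\cdot)$.

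For the forward direction I would invoke Theorem~\ref{thm:structural-subsumes}. I take $h'$ to be an arbitrary content-filtering handler, meaning any base handler whose outputs are post-processed, and apply $\Gov$ to it. The theorem gives $\govsafe(\text{false}, \interp(\Gov(h'), t))$ for every $t$. This is itself just Theorem~\ref{thm:governed-safe} instantiated at $h'$, since that result is uniform in the handler. So whatever content filtering does to returned values, the structural wrapper still guarantees effect safety.

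For the converse I would invoke Theorem~\ref{thm:content-not-subsumes}. The witness I would use is the tree $\Vis(\text{inr1}\;e, k)$ for some $e : \IOE\;X$, with $k$ taken to be $\lambda x.\,\Ret(f(x))$, where $f$ is a content filter on returned values. This tree is content-filtered by construction. It fails to be $\govsafe$ at level $\text{false}$ by the inversion argument given above. One unfolds $\govsafe$ once via \texttt{punfold}, and the only constructor matching $\text{VisF}\;(\text{inr1}\;e)\;k$ is \texttt{GS\_IOE}, which forces $\text{allowed}=\text{true}$; this contradicts $\text{false}$. Hence content filtering does not imply structural safety.

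Putting the two directions together gives strict subsumption. The main obstacle is not technical but definitional. One has to phrase ``content governance'' as a class of trees or handlers, for instance handlers of the form $\lambda d.\,h(d) \bind (\lambda r.\,\Ret(f\,r))$, so that both theorems speak about the same class. If necessary, I would restate the witness as the ungoverned interpretation $\interp(h_f, \texttt{trigger}(d))$ lifted into $\GovIOE$ for some I/O-emitting directive $d$. This keeps the counterexample inside the same family of programs to which the forward direction applies.
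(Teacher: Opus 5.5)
Your proposal is correct and takes essentially the paper's route. The corollary is the conjunction of Theorem~\ref{thm:structural-subsumes}, which is just Theorem~\ref{thm:governed-safe} instantiated at an arbitrary (possibly content-filtering) handler, and Theorem~\ref{thm:content-not-subsumes}, witnessed by a bare $\Vis(\text{inr1}\;e,k)$ refuted by one \texttt{punfold} and inversion since \texttt{GS\_IOE} forces $\text{allowed}=\text{true}$. Your concrete choice $k=\lambda x.\,\Ret(f(x))$ and your remark about fixing the class of content-filtering handlers make precise what the paper leaves informal; they do not change the argument.
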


% ==================================================================
\section{Capstone Theorem}
\label{sec:capstone}
% ==================================================================

We assemble the properties into a single theorem
\texttt{governed\_cognitive\_completeness}.

\begin{lstlisting}[style=coq]
Theorem governed_cognitive_completeness :
  forall (h : base_handler),
    (* P1: Turing complete AND governed *)
    (forall (p : program) (fuel : nat),
      gov_safe false
        (interp (Gov h) (translate_program p fuel 0)))
    /\
    (* P2: Oracle complete AND governed *)
    (forall (p : oracle_program) (fuel : nat) (pc : label),
      gov_safe false
        (interp (Gov h) (translate_oracle_program p fuel pc)))
    /\
    (* P3: structural_decidable /\ compositional_closure
            /\ halting_nontrivial
            /\ governance_cannot_decide_halting *)
    ...  (* 10 conjuncts; see Rice.v *)
    /\
    (* P4: governed_goal_preservation *)
    ...  (* see GoalDirected.v *)
    /\
    (* P5: cognitive_surjection /\ primitive_minimality
            /\ strict_hierarchy *)
    ...  (* see CognitiveArchitecture.v,
            ExpressiveMinimality.v *)
    /\
    (* P6: structural_subsumes_content
            /\ content_not_subsumes_structural *)
    ...  (* see Subsumption.v *)
    /\
    (* P7: Semantic transparency *)
    (forall (io_h : forall X, IOE X -> X) R
            (t : itree DirectiveE R),
      eutt eq
        (interp (fun X e => Ret (permissive_handler io_h X e))
                (interp (Gov h) t))
        (interp (fun X e => Ret (io_h X e))
                (interp h t))).
\end{lstlisting}

\paragraph{Structure.}
The conjuncts correspond to: governed expressivity (P1--P2), the
decidability boundary and closure (P3), goal preservation (P4),
expressive minimality (P5), subsumption asymmetry (P6), and semantic
transparency (P7). Each conjunct is proved by importing the relevant
module theorem and applying it directly.

% ==================================================================
\section{Discussion and Trust Boundary}
\label{sec:discussion}
% ==================================================================

Our theorems establish properties of a formal model: interaction-tree
programs, a governance wrapper, and a safety predicate. The guarantees
hold relative to a standard trusted computing base: the Rocq kernel
(type-checking the proofs), the Interaction Trees library (providing the
coinductive framework), and any runtime that faithfully implements the
$\Gov$ operator and directive interpretation. This TCB is comparable to
that of verified compilers~\cite{leroy2009compcert} and verified OS
kernels~\cite{klein2009sel4}: formal properties hold of the model, and
the deployment gap is bridged by testing. In our case, the connection
from model to a deployed runtime is validated via a verified interpreter
specification tested against over 70,000 randomly generated directive
sequences with zero disagreements (included in the artifact). Effect
separation for the executor layer is developed
in~\cite{mccann2026purity}.

The key design principle established here is not ``governance by output
restriction,'' but \emph{effect-transparent governance}: constrain the
effect boundary while preserving semantics on permitted executions.

\paragraph{Limitations.}
Semantic transparency is proved relative to executions in which
governance checks succeed; when a check fails, the governed semantics
diverges (\texttt{spin}) rather than simulating the ungoverned run.
The expressivity results use a fuel-indexed register-machine
translation, which captures partiality but abstracts from resource
exhaustion, scheduling, and real-time constraints. Oracle calls are
modeled as directives with unconstrained responses; we do not formalize
probabilistic model behavior, distributional assumptions, or
quantitative uncertainty. Governance predicates are intentionally
structural and total; the development does not attempt to decide
semantic properties such as termination or functional correctness. The
trusted computing base includes the Rocq kernel, the Interaction Trees
library, and any extraction or interpretation layer used to realize the
semantics in a deployed runtime. Accordingly, the theorems are
statements about the abstract architecture and its interpreter, not
end-to-end guarantees about compiled binaries. Finally, transparency is
stated modulo erasure of governance-only events and weak bisimulation
(\texttt{eutt}), reflecting preservation of observable results rather
than literal trace identity.

% ==================================================================
\section{Related Work}
\label{sec:related}
% ==================================================================

\paragraph{Algebraic effect handlers.}
Plotkin and Pretnar~\cite{plotkin2009handlers} introduced handlers for
algebraic effects; Bauer and Pretnar~\cite{bauer2015programming}
developed the programming model. Our governance operator is a handler
wrapper in the same tradition: it interposes on effect interpretation
without modifying the underlying handler. Wadler's monadic
approach~\cite{wadler1995monads} gives the semantic foundation for
interpreting effectful programs through bind; our interaction-tree
development is inherently monadic.

\paragraph{Interaction Trees and coinduction.}
Xia et al.~\cite{xia2020interaction} introduced interaction trees for
representing recursive and impure programs in Rocq; Zakowski et
al.~\cite{zakowski2021itree} applied them to verified LLVM semantics.
Hur et al.~\cite{hur2013power} developed parameterized coinduction
(\texttt{paco}), which underlies both our safety and transparency
proofs.

\paragraph{Verified systems.}
CompCert~\cite{leroy2009compcert} and seL4~\cite{klein2009sel4}
demonstrate that large formal verification efforts yield reliable
systems. Our transparency theorem is structurally analogous to
CompCert's forward simulation: both show that a transformation
preserves observable behavior.

\paragraph{Content governance and alignment.}
RLHF~\cite{ouyang2022training} and Constitutional
AI~\cite{bai2022constitutional} operate at the output level. Our
subsumption theorem formalizes that such methods do not imply
structural effect safety.

\paragraph{Verified AI and agent safety.}
Prior frameworks propose properties to
verify~\cite{seshia2016verified,dalrymple2024guaranteed}; we deliver a
mechanized development with semantic transparency and expressive
minimality.

\paragraph{Own prior work.}
The governance model, safety predicate, and Turing completeness proof
are given in~\cite{mccann2026mechanized}. The structural argument for
why behavioral governance fails at the effect level is developed
in~\cite{mccann2026structural}. Effect separation for the executor
layer is formalized in~\cite{mccann2026purity}. The present paper
extends these with semantic transparency, expressive minimality, and a
decidability boundary with compositional closure.

\paragraph{Cognitive architectures.}
Park et al.~\cite{park2023generative} demonstrated generative agents
with perceive-reflect-act loops; Sumers et
al.~\cite{sumers2024cognitive} surveyed cognitive architectures for
language agents. Our primitive capability taxonomy (compute, memory,
reason, call, observe) is a formalization of capabilities these
architectures require, with machine-checked minimality.

% ==================================================================
\section{Conclusion}
\label{sec:conclusion}
% ==================================================================

We have presented a machine-checked semantics for effect-governed AI
workflow architectures and proved that structural governance can be
imposed without reducing computational expressivity. Our main theorem
shows semantic transparency on permitted executions: governed and
ungoverned interpretations are observationally equivalent modulo
governance-only events. Supporting results establish governed
expressivity (Turing and oracle), a compositional decidability boundary,
expressive minimality of primitive capabilities, and strict subsumption
of content filtering by structural governance.

\paragraph{Artifact availability.}
The complete Rocq development (36 modules, ~12,000 lines, 454 theorems, 0
admitted lemmas) is available at
\url{https://github.com/mashin-live/governance-proofs}.

\bibliographystyle{plainnat}
\bibliography{governed-cognitive-completeness-references}

\end{document}